\newtheorem{lem}{Lemma}
\newtheorem{prop}{Proposition}
\newtheorem*{rep@theorem}{\rep@title}
\newcommand{\newreptheorem}[2]{%
\newenvironment{rep#1}[1]{%
 \def\rep@title{#2 \ref{##1}}%
 \begin{rep@theorem}}%
 {\end{rep@theorem}}}
\theoremstyle{definition}
\newcommand{\calX}{\mathcal{X}}
\newcommand{\calB}{\mathcal{B}}
\newcommand{\calD}{\mathcal{D}}
\newcommand{\calY}{\mathcal{Y}}
\renewcommand{\tilde}{\widetilde}
\newcommand{\Xh}{\hat{X}}
\newcommand{\Yh}{\hat{Y}}
\newcommand{\Reals}{\mathbb{R}}
\newcommand{\defined}{\triangleq}
\newcommand{\ExpVal}[2]{\mathbb{E}\left[ #2 \right]}
\newcommand{\Yt}{\tilde{Y}}
\newcommand{\yt}{\tilde{y}}
\newcommand{\sto}{\mbox{\normalfont s.t.}}
\newcommand{\EE}[1]{\ExpVal{}{#1}}
\newcommand{\xh}{\hat{x}}
\newcommand{\yh}{\hat{y}}
\newcommand{\ah}{\hat{a}}
\newcommand{\eh}{\hat{e}}
\newcommand{\DKL}{D_\mathsf{KL}}
\definecolor{light-gray}{gray}{.90}
\definecolor{aliceblue}{rgb}{0.94, 0.97, 1.0}
\definecolor{airforceblue}{rgb}{0.36, 0.54, 0.66}
\newmdenv[%
  backgroundcolor=light-gray, %
  linecolor=black,
  linewidth =1pt,%
  skipabove = 10pt,%
  skipbelow = 10pt
]{comments}
\newmdenv[%
  backgroundcolor=aliceblue, %
  linewidth = 2pt,%
  skipabove = 10pt,%
  skipbelow = 10pt,
  pstrickssetting={linestyle=dashed,},
  linecolor=airforceblue,
  middlelinewidth=2pt
]{TODO}
\begin{document}

\title{Optimized Data Pre-Processing for Discrimination Prevention}
\author{Flavio P. Calmon, Dennis Wei, Karthikeyan Natesan Ramamurthy, and Kush R. Varshney\\Data Science Department, IBM Thomas J. Watson Research Center\thanks{Contact: \texttt{\{fdcalmon,dwei,knatesa,krvarshn\}@us.ibm.com}}}
\date{}
\maketitle






\begin{abstract} 
Non-discrimination is a recognized objective in algorithmic decision making. In this paper, we introduce a novel probabilistic formulation of data pre-processing for reducing discrimination. We propose a convex optimization for learning 
a data transformation with three goals: controlling discrimination, limiting distortion in individual data samples, and preserving utility. We characterize the impact of limited sample size in accomplishing this objective,  and apply two  instances of the proposed optimization to datasets, including one on real-world criminal recidivism. The results demonstrate that all three criteria can be simultaneously achieved and also reveal interesting patterns of bias in American society.
\end{abstract} 

\section{Introduction}
\label{sec:intro}
Discrimination is the prejudicial treatment of an individual based on membership in a legally protected 
group such as a race or gender.  Direct discrimination occurs when protected attributes are used explicitly in making decisions, which is referred to as \emph{disparate treatment} in law.  More pervasive 
nowadays is indirect discrimination, in which protected attributes are not used but reliance on variables correlated with them leads to significantly different outcomes for different groups.  The latter phenomenon is termed \emph{disparate impact}.  Indirect discrimination may be intentional, as in the historical practice of ``redlining'' in the U.S.\ in which home mortgages were denied in zip codes populated primarily by minorities.  However, the doctrine of disparate impact applies in many situations regardless of actual intent. 

Supervised 
learning algorithms, increasingly used for decision making in applications of consequence, may at first be 
presumed to be fair and devoid of inherent bias, but in fact, inherit any bias or discrimination present in the data on which they are trained \citep{calders2013unbiased}.  Furthermore, simply removing protected variables from the data is not enough since it does nothing to address indirect discrimination and may in fact conceal it. The need for more sophisticated tools has made  
discrimination discovery and prevention 
an important research area \citep{pedreschi2008discrimination}. 
 
Algorithmic discrimination prevention involves 
modifying one or more of the following to ensure that decisions made by supervised learning 
methods are less biased: 
(a) the training data, (b) the learning algorithm, and (c) the ensuing decisions themselves. These are respectively classified as pre-processing \citep{hajian2013simultaneous}, in-processing \citep{fish2016confidence, zafar2016fairness, kamishima2011fairness} and post-processing approaches \citep{hardt2016equality}. In this paper, we focus on pre-processing since it is the most flexible in terms of the data science pipeline: it is independent of the modeling 
algorithm and can be integrated with data release and publishing mechanisms. 

Researchers have also studied several notions of discrimination and fairness. Disparate impact is addressed by the principles of \emph{statistical parity} and \textit{group fairness} \citep{feldman2015certifying}, which seek similar outcomes for all groups. 
In contrast, \textit{individual fairness} \citep{dwork2012fairness} mandates that similar individuals be treated similarly irrespective of group membership.  For classifiers and other predictive models, equal error rates for different groups are a desirable property \citep{hardt2016equality}, as is calibration or lack of \textit{predictive bias} in the predictions \citep{zhang2016identifying}. The tension between the last two notions is described by \citet{kleinberg2017inherent} and \citet{chouldechova2016fair}; the work of \citet{friedler2016possibility} is in a similar vein. \citet{corbett2017algorithmic} discuss the cost of satisfying prevailing notions of algorithmic fairness from a public safety standpoint and discuss the trade-offs. Since the present work pertains to pre-processing and not modeling, balanced error rates and predictive bias are less relevant criteria.  Instead we focus primarily on achieving group fairness while also accounting for individual fairness through a distortion constraint.

Existing pre-processing approaches include sampling or re-weighting the data to neutralize discriminatory effects \citep{kamiran2012data}, changing the individual data records \citep{hajian2013methodology}, and using $t$-closeness \citep{li2007t} for discrimination control \citep{ruggieri2014using}.  A common theme is the importance of balancing discrimination control against 
utility of the processed data.  However, this prior work neither presents general and principled optimization frameworks for 
trading off these two criteria, nor allows connections to be made to the broader statistical learning and information theory literature via probabilistic descriptions. 
Another shortcoming 
is that individual distortion or fairness is not made explicit. 

\begin{figure}[!tb]
    \centering
    \includegraphics[scale = 0.35]{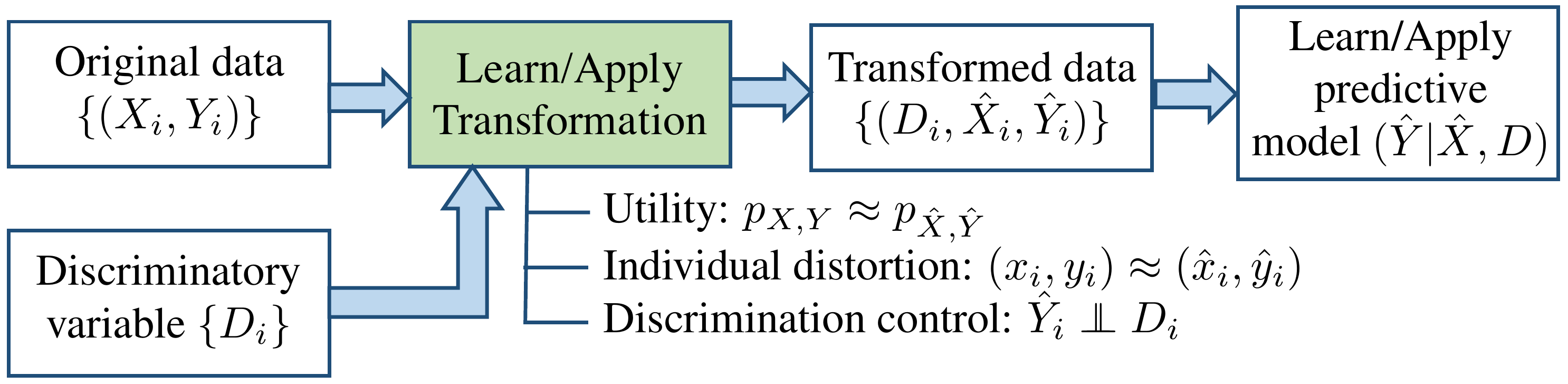}
    \caption{\small{The proposed pipeline for predictive learning with discrimination prevention. 
    \textit{Learn} mode applies with training data and \textit{apply} mode with novel test data. Note that test data also requires transformation 
    before predictions can be obtained.
    }}
    \label{fig:disc_transform_pipeline}
\end{figure}

In this work, addressing gaps in the pre-processing literature, 
we introduce a probabilistic framework for discrimination-preventing pre-processing in supervised learning.  Our aim in part is to work toward a more unified view of previously proposed concepts and methods, which may help to suggest refinements.  We formulate the determination of a pre-processing transformation as an optimization problem 
that trades off discrimination control, data utility, and individual distortion. 
(Trade-offs among various fairness notions may be inherent as 
shown by \citet{kleinberg2017inherent}.) 
While discrimination and utility are defined 
at the level of probability distributions, distortion is controlled 
on a per-sample basis, thereby limiting the effect of the transformation on individuals and ensuring a degree of individual fairness. Figure \ref{fig:disc_transform_pipeline} illustrates the supervised learning pipeline that includes our proposed discrimination-preventing pre-processing. 

The work of \citet{zemel2013fairrepresentations} is closest to ours in 
also presenting a framework with three criteria related to discrimination control (group fairness), individual fairness, and utility.  However, the criteria are manifested less directly than in our proposal.  In particular, discrimination control is posed in terms of intermediate features rather than outcomes, individual distortion does not take outcomes into account (simply being an $\ell_2$-norm between original and transformed features), and utility is specific to a particular 
classifier.  Our formulation more naturally and generally encodes these fairness and utility desiderata. 

Given the novelty of our formulation, we devote more effort than usual to discussing its motivations and potential variations.  We state natural conditions under which the proposed optimization problem is convex.  The resulting transformation is in general a randomized one. 
The proposed optimization problem assumes as input an estimate of the distribution of the data which, in practice, can be imprecise due to limited sample size. Accordingly, we characterize the possible degradation in discrimination 
and utility guarantees at test time in terms of the training sample size. 
As a  demonstration of our framework, we apply specific instances of it to a prison recidivism risk score dataset \citep{compas_risk_scores} and the UCI adult dataset \citep{Lichman:2013}. By  solving the optimization problem, we show that discrimination, distortion, and utility loss can be controlled simultaneously with real data. In addition, the resulting transformations reveal intriguing demographic patterns in the data.



\vspace{-.15in}
\section{General Formulation}
\label{sec:proposed}
We are given a dataset consisting of $n$ i.i.d.\ samples $\left\{(D_i,X_i,Y_i)\right\}_{i=1}^n$ from a joint distribution $p_{D,X,Y}$ with domain $\calD \times \calX \times \calY$.  
Here $D$ denotes one or more \textit{discriminatory} variables such as gender and race, $X$ denotes other non-protected variables used for decision making, and $Y$ is an 
\textit{outcome} random variable. 
For instance, $Y_i$ could represent a loan approval decision for individual $i$ based on demographic information $D_i$ and credit score $X_i$. We focus in this paper on discrete (or discretized) and finite domains $\calD$ and $\calX$ and binary outcomes, i.e.\ $\calY = \{0,1\}$. There is no restriction on the dimensions of $D$ and $X$. 

Our goal is to determine a randomized mapping $p_{\Xh,\Yh|X,Y,D}$ that (i) transforms the given dataset into a new dataset $\left\{(D_i,\Xh_i,\Yh_i)\right\}_{i=1}^n$, which may be used to train a model, and (ii) similarly transforms data to which the model is applied, i.e.\ test data.  Each $(\Xh_i, \Yh_i)$ is drawn independently from the same domain $\calX \times \calY$ as $X, Y$ 
by applying $p_{\Xh,\Yh|X,Y,D}$ to the corresponding triplet $(D_i,X_i,Y_i)$.  Since $D_i$ is retained as-is, we do not include it in the mapping to be determined.  Motivation for retaining $D$ is discussed later in Section \ref{sec:theory:generalize}. 
For test samples, $Y_i$ is not available at the input while $\Yh_i$ may not be needed at the output.  In this case, a reduced mapping $p_{\Xh|X,D}$ may be used, which can be obtained from $p_{\Xh,\Yh|X,Y,D}$ by marginalizing over $\Yh$ and $Y$ after weighting by $p_{Y|X,D}$.

It is assumed that 
$p_{D,X,Y}$ is known along with its marginals and conditionals. This assumption is often satisfied using the empirical distribution of $\{(D_i,X_i,Y_i)\}_{i=1}^n$.  In Section \ref{sec:theory:generalize}, we state a result ensuring that discrimination and utility loss continue to be controlled if the distribution used to determine $p_{\Xh,\Yh|X,Y,D}$ differs from the distribution of test samples.

We propose that the mapping $p_{\Xh,\Yh|X,Y,D}$ satisfy the properties discussed in the following three subsections. 

\subsection{Discrimination Control}
\label{sec:proposed:disc}
The first objective is to limit the dependence of the transformed outcome $\Yh$ on the discriminatory variables $D$, as represented by the conditional distribution $p_{\Yh|D}$. We propose two alternative formulations. 
The first requires $p_{\Yh|D}$ to be close to a target distribution $p_{Y_T}$ for all values of $D$, 
\begin{equation}\label{eqn:disc_control2}
	J\left(p_{\Yh|D}(y|d),p_{Y_T}(y)\right)\leq \epsilon_{y,d}~\forall ~d\in \calD, y\in \{0,1\},
\end{equation}  
where $J(\cdot,\cdot)$ denotes some distance function. 
The second formulation constrains $p_{\Yh|D}$ to be similar for any two values of $D$, 
\begin{equation}\label{eqn:disc_control1}
	J\left(p_{\Yh|D}(y|d_1),p_{\Yh|D}(y|d_2)\right)\leq \epsilon_{y,d_1,d_2}
\end{equation}  
for all $d_1,d_2\in \calD, y\in \{0,1\}.$ The latter \eqref{eqn:disc_control1} does not require a target distribution as reference but does increase the number of constraints from $O(\lvert\calD\rvert)$ to $O(\lvert\calD\rvert^2)$. 


The choice of target $p_{Y_T}$ in \eqref{eqn:disc_control2}, and distance $J$ and thresholds $\epsilon$ in \eqref{eqn:disc_control2} and \eqref{eqn:disc_control1} should be informed by societal considerations. 
If the application domain has a clear legal definition of disparate impact, for example the ``80\% rule'' \citep{EEOC}, then it can be translated into a mathematical constraint.  Otherwise and more generally, the instantiation of \eqref{eqn:disc_control2} should involve consultation with domain experts and stakeholders before being put into practice.  

For this work, we choose $J$ to be the following probability ratio measure:
\begin{equation}\label{eqn:probRatio}
J(p,q) = \left\lvert \frac{p}{q} - 1 \right\rvert.
\end{equation}
The combination of \eqref{eqn:probRatio} and \eqref{eqn:disc_control2} generalizes the extended lift criterion proposed in the literature \citep{pedreschi2012study}, while the combination of \eqref{eqn:probRatio} and \eqref{eqn:disc_control1} generalizes selective and contrastive lift.  In the numerical results in Section \ref{sec:results}, we use both \eqref{eqn:disc_control2} and \eqref{eqn:disc_control1}. For \eqref{eqn:disc_control2}, we make the straightforward choice of setting $p_{Y_T} = p_Y$, the original marginal distribution of the outcome variable.
We recognize however that this choice of target may run the risk of perpetuating bias in the original dataset.  On the other hand, how to choose a target distribution that is ``fairer'' than $p_Y$ is largely an open question; we refer the reader to \citet{zliobaite2011conditional} for one such proposal, which is reminiscent of the concept of ``balanced error rate'' in classification \citep{zhao2013ber}. 

In \eqref{eqn:disc_control2} and \eqref{eqn:disc_control1}, discrimination control is imposed jointly with respect to all discriminatory variables, e.g.\ all combinations of gender and race if $D$ consists of those two variables.  An alternative is to take the discriminatory variables one at a time, e.g.\ gender without regard to race and vice-versa.  The latter, which we refer to as univariate discrimination control, can be formulated similarly to \eqref{eqn:disc_control2}, \eqref{eqn:disc_control1}.  In this work, we opt for joint discrimination control as it is more stringent than univariate.  We note however that legal formulations tend to be of the univariate type.

Formulations \eqref{eqn:disc_control2} and \eqref{eqn:disc_control1} control discrimination at the level of the overall population in the dataset.  It is also possible to control discrimination within segments of the population by conditioning on additional variables $B$, where $B$ is a subset of $X$ and $X$ is a collection of features. Constraint \eqref{eqn:disc_control2} would then generalize to 
\begin{equation}\label{eqn:disc_control_cond}
	J\left(p_{\Yh|D,B}(y|d,b),p_{Y_T|B}(y|b)\right)\leq \epsilon_{y,d,b}
\end{equation}  
for all $d\in \calD,$ $ y\in \{0,1\},$ and $b \in \calB.$ Similar conditioning or ``context'' for discrimination has been explored before in \citet{hajian2013methodology} in the setting of association rule mining.  As one example, $B$ may consist of non-discriminatory variables that are strongly correlated with the outcome $Y$, e.g.\ education level as it relates to income.  One may wish 
to control for such variables in determining whether discrimination is present and needs to be corrected.  At the same time, care must be taken so that the population segments created by conditioning on $B$ are large enough for statistically valid inferences to be made. For present purposes, we simply note that conditional discrimination constraints \eqref{eqn:disc_control_cond} can be accommodated in our framework and defer further investigation to future work.
\vspace{-.1in}
\subsection{Distortion Control}
The mapping $p_{\Xh,\Yh|X,Y,D}$ should satisfy distortion constraints with respect to the domain $\calX \times \calY$. These constraints restrict the mapping to reduce or avoid altogether certain large changes (e.g. a very low credit score being mapped to a very high credit score). Given a distortion metric $\delta:(\calX\times\calY)^2\to \Reals_+$, we constrain the conditional expectation of the distortion as follows: 
\begin{equation}\label{eqn:dist_control}
    \EE{\delta\bigl((x,y), (\Xh,\Yh)\bigr) \mid D=d, X=x,Y=y} \leq c_{d,x,y}~
    \forall~(d,x,y)\in \calD \times \calX\times \calY.
\end{equation}
We assume that $\delta(x,y,x,y)=0$ for all $(x,y)\in \calX\times\calY$. 

Constraint \eqref{eqn:dist_control} is formulated with pointwise conditioning on $(D,X,Y) = (d,x,y)$ 
in order to promote \emph{individual} fairness.  It ensures that distortion is controlled for every combination of $(d,x,y)$, i.e.\ every individual in the original dataset, and more importantly, every individual to which a model is later applied.  By way of contrast, an average-case measure in which an expectation is also taken over $D, X, Y$ may result in high distortion for certain $(d,x,y)$, likely those with low probability.  Equation \eqref{eqn:dist_control} also allows the level of control $c_{d,x,y}$ to depend on 
$(d,x,y)$ if desired. We also note that \eqref{eqn:dist_control} is a property of the mapping $p_{\Xh,\Yh|D,X,Y}$, and does not depend on the assumed distribution $p_{D,X,Y}.$ 

The expectation over $\Xh, \Yh$ in \eqref{eqn:dist_control} encompasses several cases depending on the choices of the metric $\delta$ and thresholds $c_{d,x,y}$. If $c_{d,x,y} = 0$, then no mappings with nonzero distortion are allowed for individuals with original values $(d,x,y)$.  If $c_{d,x,y} > 0$, then certain mappings may still be disallowed by assigning them infinite distortion.  Mappings with finite distortion are permissible subject to the budget $c_{d,x,y}$. Lastly, if $\delta$ is binary-valued (perhaps achieved by thresholding a multi-valued distortion function), it can be seen as classifying mappings into desirable ($\delta = 0$) and undesirable ones ($\delta = 1$). Here, \eqref{eqn:dist_control} reduces to a bound on the conditional probability of an undesirable mapping, i.e.
\begin{equation}
    \label{eqn:dist_control_excess}
    \Pr\left( \delta\bigl((x,y), (\Xh,\Yh)\bigr)=1 \mid D=d, X=x,Y=y \right) \leq c_{d,x,y}.
\end{equation}

\subsection{Utility Preservation}
In addition to constraints on individual distortions, 
we also require 
that the \textit{distribution} of $(\Xh,\Yh)$ be statistically close to the distribution of $(X,Y)$. This is to ensure that a model learned from the transformed dataset (when averaged over the discriminatory variables $D$) is not too different from one learned from the original dataset, e.g.\ a bank's existing policy for approving loans. For a given dissimilarity measure $\Delta$ between probability distributions (e.g. KL-divergence), we require that $\Delta\left(p_{\Xh,\Yh},p_{X,Y}\right)$ be small.

\subsection{Optimization Formulation}
Putting together the considerations from the three previous subsections, we arrive at the optimization problem below for 
determining a randomized transformation $p_{\Xh,\Yh|X,Y,D}$ mapping each sample $(D_i, X_i, Y_i)$ to $(\Xh_i, \Yh_i)$: 
\begin{align}
    	\min_{p_{\Xh,\Yh|X,Y,D}} \;\; &\Delta\left(p_{\Xh,\Yh},p_{X,Y}\right) \nonumber\\
	    \sto \;\; &J\left(p_{\Yh|D}(y|d),p_{Y_T}(y)\right)\leq \epsilon_{y,d}~\mbox{ and} \nonumber\\
	    &\EE{\delta\bigl((x,y), (\Xh,\Yh)\bigr) \mid D=d, X=x,Y=y}~\leq c_{d,x,y} ~\forall~(d,x,y)\in \calD \times \calX\times \calY,\nonumber\\
	    &p_{\Xh,\Yh|X,Y,D} \mbox{ is a valid distribution.}\label{eqn:optimization}
\end{align}
We choose to minimize the utility loss $\Delta$ subject to constraints on individual distortion \eqref{eqn:dist_control} and discrimination, where we have used \eqref{eqn:disc_control2} for concreteness, since it is more natural to place bounds on the latter two. 

The distortion constraints \eqref{eqn:dist_control} are an essential component of the problem formulation \eqref{eqn:optimization}. Without \eqref{eqn:dist_control} and assuming that $p_{Y_T} = p_Y$, it is possible to achieve perfect utility and non-discrimination simply by sampling $(\Xh_i, \Yh_i)$ from the original distribution $p_{X,Y}$ independently of any inputs, i.e.\ $p_{\Xh,\Yh|X,Y,D}(\xh,\yh|x,y,d) = p_{\Xh,\Yh}(\xh,\yh) = p_{X,Y}(\xh,\yh)$.  Then $\Delta\left(p_{\Xh,\Yh},p_{X,Y}\right) = 0$, and $p_{\Yh|D}(y|d) = p_{\Yh}(y) = p_Y(y) = p_{Y_T}(y)$ for all $d \in \calD$. This solution however is clearly objectionable from the viewpoint of individual fairness, especially for individuals to whom a subsequent model is applied since it amounts to discarding an individual's data and replacing it with a random sample from the population $p_{X,Y}$.  Constraint \eqref{eqn:dist_control} seeks to prevent such gross deviations from occurring.

\section{Theoretical Properties}
\label{sec:theory}

\subsection{Convexity}
\label{sec:theory:convex}
We first discuss conditions under which \eqref{eqn:optimization} is a convex or quasiconvex optimization problem. Considering first the objective function, the distribution $p_{X,Y}$ is a given quantity while 
\begin{equation*}
    p_{\Xh,\Yh}(\xh,\yh) = \sum_{d,x,y} p_{D,X,Y}(d,x,y) p_{\Xh,\Yh|D,X,Y}(\xh,\yh|d,x,y)
\end{equation*}
is seen to be a linear function of the mapping $p_{\Xh,\Yh|D,X,Y}$, i.e.\ the optimization variable. Hence if the statistical dissimilarity $\Delta(\cdot,\cdot)$ is convex in its first argument with the second fixed, then $\Delta(p_{\Xh,\Yh},p_{X,Y})$ is a convex function of $p_{\Xh,\Yh|D,X,Y}$ by the affine composition property \citep{boyd2004}.  This condition is satisfied for example by all $f$-divergences \citep{csiszar_information_2004}, which are jointly convex in both arguments, and by all Bregman divergences \citep{banerjee_clustering_2005}.  If instead $\Delta(\cdot,\cdot)$ is only quasiconvex in its first argument, a similar composition property implies that $\Delta(p_{\Xh,\Yh},p_{X,Y})$ is a quasiconvex function of $p_{\Xh,\Yh|D,X,Y}$ \citep{boyd2004}. 

For discrimination constraint \eqref{eqn:disc_control2}, the target distribution $p_{Y_T}$ is assumed to be given.  The conditional distribution $p_{\Yh|D}$ can be related to $p_{\Xh,\Yh|D,X,Y}$ as follows:
\begin{align*}
    p_{\Yh|D}(\yh|d) 
    &= \sum_{\xh} \sum_{x,y} p_{X,Y|D}(x,y|d) p_{\Xh,\Yh|D,X,Y}(\xh,\yh|d,x,y).
\end{align*}
Since $p_{X,Y|D}$ is given, 
$p_{\Yh|D}$ is a linear function of $p_{\Xh,\Yh|D,X,Y}$.  Hence by the same composition property as above, \eqref{eqn:disc_control2} is a convex constraint, i.e.\ specifies a convex set, if the distance function $J(\cdot,\cdot)$ is quasiconvex in its first argument.

If constraint \eqref{eqn:disc_control1} is used instead of \eqref{eqn:disc_control2}, then both arguments of $J$ are linear functions of $p_{\Xh,\Yh|D,X,Y}$.  Hence \eqref{eqn:disc_control1} is convex if $J$ is jointly quasiconvex in both arguments.

Lastly, the distortion constraint \eqref{eqn:dist_control} can be expanded explicitly in terms of $p_{\Xh,\Yh|D,X,Y}$ to yield 
\begin{equation*}
    \sum_{\xh,\yh} p_{\Xh,\Yh|D,X,Y}(\xh,\yh|d,x,y) \delta\bigl((x,y), (\xh,\yh)\bigr) \leq c_{d,x,y}.
\end{equation*}
Thus \eqref{eqn:dist_control} is a linear constraint in $p_{\Xh,\Yh|D,X,Y}$ regardless of the choice of distortion metric $\delta$. 

We summarize this subsection with the following proposition.
\begin{prop}
Problem \eqref{eqn:optimization} is a (quasi)convex optimization if $\Delta(\cdot,\cdot)$ is (quasi)convex and $J(\cdot,\cdot)$ is quasiconvex in their respective first arguments (with the second arguments fixed). If discrimination constraint \eqref{eqn:disc_control1} is used in place of \eqref{eqn:disc_control2}, then the condition on $J$ is that it be jointly quasiconvex in both arguments.
\end{prop}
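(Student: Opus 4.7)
The plan is to verify the three structural pieces of problem \eqref{eqn:optimization} in turn: the objective, the discrimination constraint, and the distortion constraint, and then assemble them using standard convexity-preserving operations. The intersection of (quasi)convex sublevel sets with the affine simplex constraint $p_{\Xh,\Yh|X,Y,D} \in \Delta$ yields a (quasi)convex feasible set, and a (quasi)convex objective over such a set is a (quasi)convex program.

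First, for the objective, I would write out
\begin{equation*}
    p_{\Xh,\Yh}(\xh,\yh) = \sum_{d,x,y} p_{D,X,Y}(d,x,y)\, p_{\Xh,\Yh|D,X,Y}(\xh,\yh|d,x,y),
\end{equation*}
and note that since $p_{D,X,Y}$ is a fixed input, the mapping $p_{\Xh,\Yh|D,X,Y} \mapsto p_{\Xh,\Yh}$ is affine. Combined with the hypothesis that $\Delta(\cdot,p_{X,Y})$ is (quasi)convex with its second argument fixed, the affine composition rule (Boyd and Vandenberghe, \S3.2.2 and \S3.4.3) gives that $\Delta(p_{\Xh,\Yh},p_{X,Y})$ is (quasi)convex in the optimization variable.

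Second, for the discrimination constraint \eqref{eqn:disc_control2}, I would similarly observe that
\begin{equation*}
    p_{\Yh|D}(\yh|d) = \sum_{\xh}\sum_{x,y} p_{X,Y|D}(x,y|d)\, p_{\Xh,\Yh|D,X,Y}(\xh,\yh|d,x,y)
\end{equation*}
is an affine function of $p_{\Xh,\Yh|D,X,Y}$. Under the hypothesis that $J(\cdot,p_{Y_T}(y))$ is quasiconvex, each constraint in \eqref{eqn:disc_control2} is a sublevel set of a quasiconvex function, hence convex; the intersection over $(y,d)$ remains convex. If instead \eqref{eqn:disc_control1} is used, both arguments of $J$ are affine in the optimization variable, so joint quasiconvexity of $J$ is needed to invoke the composition rule and conclude convexity of the sublevel set.

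Third, for the distortion constraint, expanding the conditional expectation gives
\begin{equation*}
    \sum_{\xh,\yh} p_{\Xh,\Yh|D,X,Y}(\xh,\yh|d,x,y)\, \delta\bigl((x,y),(\xh,\yh)\bigr) \leq c_{d,x,y},
\end{equation*}
which is linear (hence convex) in the optimization variable regardless of $\delta$. Finally, the requirement that $p_{\Xh,\Yh|D,X,Y}$ be a valid conditional distribution is a set of linear equality and nonnegativity constraints, thus convex. Taking the intersection of all these convex sets as the feasible region and minimizing a (quasi)convex objective over it yields the claim. I do not anticipate a main obstacle here; the argument is essentially bookkeeping around the affine composition property, with the only subtlety being that \eqref{eqn:disc_control1} requires the stronger joint quasiconvexity assumption on $J$ because both of its arguments vary with the optimization variable.
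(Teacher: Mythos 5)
Your proposal is correct and follows essentially the same route as the paper: show that $p_{\Xh,\Yh}$, $p_{\Yh|D}$, and the distortion expectation are all affine in $p_{\Xh,\Yh|D,X,Y}$, then invoke the affine composition rule for (quasi)convexity on the objective and the discrimination constraint, treat the distortion constraint as linear, and take intersections. The only addition you make beyond the paper's explicit text is spelling out that the validity constraint on the conditional distribution is an affine (simplex) set, which the paper leaves implicit.
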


\subsection{Generalizability of Discrimination Control}
\label{sec:theory:generalize}

We now discuss the generalizability of discrimination guarantees \eqref{eqn:disc_control2} and \eqref{eqn:disc_control1} to unseen individuals, i.e.\ those to whom a model is applied.  Recall from Section \ref{sec:proposed} that the proposed transformation retains the discriminatory variables $D$.  We first consider the case where models trained on the transformed data to predict $\Yh$ are allowed to depend on $D$.  While such models may qualify as disparate treatment, the intent and effect is to better mitigate disparate impact resulting from the model. 
In this respect our proposal shares the same spirit with ``fair'' affirmative action in \citet{dwork2012fairness} (fairer on account of distortion constraint \eqref{eqn:dist_control}). Later in this subsection we consider the case where $D$ is suppressed at classification time.

\subsubsection{Maintaining the Discriminatory Variable}

Assuming that predictive models for $\Yh$ can depend on $D$, let $\Yt$ be the output of such a model based on $D$ and $\Xh$.  To remove the separate issue of model accuracy, suppose for simplicity that the model provides a good approximation to the conditional distribution of $\Yh$: $p_{\Yt|\Xh,D}(\yt|\xh,d) \approx p_{\Yh|\Xh,D}(\yt|\xh,d)$.  Then for individuals in a protected group $D=d$, the conditional distribution of $\Yt$ is given by 
\begin{align}
    p_{\Yt|D}(\yt|d) &= \sum_{\xh} p_{\Yt|\Xh,D}(\yt|\xh,d) p_{\Xh|D}(\xh|d)\nonumber\\
    &\approx \sum_{\xh} p_{\Yh|\Xh,D}(\yt|\xh,d) p_{\Xh|D}(\xh|d) = p_{\Yh|D}(\yt|d).\label{eqn:p_Yt|D1}
\end{align}
Hence the model output $p_{\Yt|D}$ can also be controlled by \eqref{eqn:disc_control2} or \eqref{eqn:disc_control1}. 

On the other hand, if $D$ must be suppressed from the transformed data, perhaps to comply with legal requirements regarding its non-use, then a predictive model can depend only on $\Xh$ and approximate $p_{\Yh|\Xh}$, i.e.\ $p_{\Yt|\Xh,D}(\yt|\xh,d) = p_{\Yt|\Xh}(\yt|\xh) \approx p_{\Yh|\Xh}(\yt|\xh)$.  In this case we have 
\begin{equation}\label{eqn:p_Yt|D2}
    p_{\Yt|D}(\yt|d) \approx \sum_{\xh} p_{\Yh|\Xh}(\yt|\xh) p_{\Xh|D}(\xh|d),
\end{equation}
which in general is not equal to $p_{\Yh|D}(\yt|d)$ in \eqref{eqn:p_Yt|D1}.  The quantity on the right-hand side of \eqref{eqn:p_Yt|D2} is less straightforward to control. We address this issue in the next subsection. 

\subsubsection{Suppressing the Discriminatory Variable}

In many applications the discriminatory variable cannot be revealed to the classification algorithm. In this case, the train-time discrimination guarantees are preserved at apply time if the Markov relationship $D\to \Xh\to \Yh$ (i.e. $p_{\Yh|\Xh,D}=p_{\Yh|\Xh}$) holds since, in this case, 
\begin{equation}\label{eqn:p_Yt|D3}
    p_{\Yt|D}(\yt|d) \approx \sum_{\xh} p_{\Yh|\Xh}(\yt|\xh) p_{\Xh|D}(\xh|d)=p_{\Yh|D}(\yt|d).
\end{equation}
Thus, given that the distribution $p_{D,X,Y}$ is known, the guarantees provided during training still hold when applied to fresh samples if the additional constraint $p_{\Xh,\Yh|D,X,Y} = p_{\Yh|\Xh}p_{\Xh|D,X,Y}$ is satisfied. We refer to \eqref{eqn:optimization} with the additional constraint $p_{\Xh,\Yh|D,X,Y} = p_{\Yh|\Xh}p_{\Xh|D,X,Y}$ as the \textit{suppressed optimization formulation} (SOF). Alas, since the added constraint is non-convex, the SOF is not a convex program, despite being convex in $p_{\Xh|D,X,Y}$ for a fixed $p_{\Yh|\Xh}$ and vice-versa (i.e. it is biconvex). We propose next two strategies for addressing this problem.

\begin{enumerate}
\item The first approach is to restrict $p_{\Yh|\Xh} = p_{Y|X}$ and solve \eqref{eqn:optimization} for $p_{\Xh|D,X,Y}$. If $\Delta(\cdot,\cdot)$ is an $f$-divergence, then
\begin{align*}
\Delta\left(p_{X,Y},p_{\Xh,\Yh}\right) &= D_f\left( p_{X,Y}\| p_{\Xh,\Yh} \right)\\
& = \sum_{x,y} p_{\Xh,\Yh}(x,y)f\left(\frac{p_{X,Y}(x,y)}{p_{\Xh,\Yh}(x,y)} \right) \\
&\geq \sum_x p_{\Xh}(x)f\left(\sum_y p_{\Yh|\Xh}(x|y) \frac{p_{X,Y}(x,y)}{p_{\Xh,\Yh}(x,y)} \right) \\
& = D_f\left(p_X\|p_\Xh \right),
\end{align*}
where the inequality follows from convexity of $f$. Since the last quantity is achieved by setting $p_{\Yh|\Xh} = p_{Y|X}$, this choice is optimal in terms of the objective function. 
It may, however, render the constraints in \eqref{eqn:optimization} infeasible. Assuming feasibility is maintained, this approach has the added benefit that a classifier $f_\theta(x)\approx p_{Y|X}(\cdot|x)$ can be trained using the original (non-perturbed) data, and maintained for classification during apply time.

\item Alternatively, a solution can be found through alternating minimization: fix $p_{\Yh|\Xh}$ and solve the SOF for  $p_{\Xh|D,X,Y}$, and then fix $p_{\Xh|D,X,Y}$ as the  optimal solution and solve the SOF for $p_{\Yh|\Xh}$. The resulting sequence of values of the objective function is non-increasing, but may converge to a local minima.
\end{enumerate}

\subsection{A Note on Estimation and Discrimination}

There is a close relationship between estimation and discrimination. If the discriminatory variable $D$ can be reliably estimated from the outcome variable $Y$, then it is reasonable to expect that the discrimination control constraint \eqref{eqn:disc_control2} does not hold for small values of $\epsilon_{y,d}$. We make this intuition precise in the next proposition when $J$ is given in \eqref{eqn:probRatio}. 

More specifically, we prove that if the advantage of estimating $D$ from $Y$ over a random guess is large, then there must exist a value of $d$ and $y$ such that $J(p_{Y|D}(y|d),p_{Y_T}(y))$ is also large. Thus, standard estimation methods can be used to detect the presence of discrimination: if an estimation algorithm can estimate $D$ from $Y$, then discrimination may be present. Alternatively, if discrimination control is successful, then no estimator can significantly improve upon a random guess when estimating $D$ from $Y$.

We denote the highest probability of correctly guessing $D$ from an observation of $Y$ by $P_c(D|Y)$, where
\begin{equation}
P_c(D|Y)\defined \max_{D\to Y\to\hat{D}} \Pr\left(D=\hat{D}\right),
\end{equation}
and the maximum is taken across all estimators $p_{\hat{D}|Y}$ that satisfy the Markov condition $D\to Y \to \hat{D}$. For $D$ and $Y$ defined over finite supports, this is achieved by the maximum \textit{a posteriori} (MAP) estimator and, consequently,
\begin{equation}
P_c(D|Y) = \sum_{y\in \calY} p_Y(y) \max_{d\in \calD} p_{D|Y}(d|y).
\end{equation}
Let $p_D^*$ be the most likely outcome of $D$, i.e. $p_D^*\defined \max_{d\in \calD} p_D(d)$. The (multiplicative) advantage over a random guess is given by
\begin{equation}
\mathsf{Adv}(D|Y)\defined \frac{P_c(D|Y)}{p_D^*}.
\end{equation}

\begin{prop}
For $D$ and $Y$ defined over finite support sets, if
\begin{equation}
\mathsf{Adv}(D|Y) > 1+\epsilon
\end{equation}
then for any $p_{Y_T}$, there exists $y\in \calY$ and $d\in \calD$ such that
\begin{equation}
\left| \frac{p_{Y|D}(y|d)}{p_{Y_T}(y)}-1 \right|> \epsilon.
\end{equation}
\end{prop}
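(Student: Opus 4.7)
The plan is to prove the contrapositive: assume the conclusion fails and show $\mathsf{Adv}(D|Y) \leq 1+\epsilon$. Specifically, suppose that for every $y \in \calY$ and $d \in \calD$,
\[
    \left| \frac{p_{Y|D}(y|d)}{p_{Y_T}(y)} - 1 \right| \leq \epsilon,
\]
which in particular yields the one-sided upper bound $p_{Y|D}(y|d) \leq (1+\epsilon)\, p_{Y_T}(y)$.

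Next, I would rewrite $P_c(D|Y)$ in the form it takes after applying Bayes' rule, so that the maximum over $d$ is taken of a product rather than of a posterior probability. Using $p_{D|Y}(d|y)\, p_Y(y) = p_{Y|D}(y|d)\, p_D(d)$, I get
\[
    P_c(D|Y) = \sum_{y \in \calY} p_Y(y) \max_{d \in \calD} p_{D|Y}(d|y) = \sum_{y \in \calY} \max_{d \in \calD} \bigl\{ p_{Y|D}(y|d)\, p_D(d) \bigr\}.
\]
Substituting the upper bound from the previous step and pulling the constant $(1+\epsilon)\, p_{Y_T}(y)$ out of the maximum, the inner max becomes $(1+\epsilon)\, p_{Y_T}(y)\, p_D^*$. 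Summing over $y$ and using $\sum_y p_{Y_T}(y) = 1$ gives $P_c(D|Y) \leq (1+\epsilon)\, p_D^*$, i.e.\ $\mathsf{Adv}(D|Y) \leq 1+\epsilon$, contradicting the hypothesis.

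Most of the work is essentially bookkeeping, so there is no real ``hard part.'' The only point worth emphasizing is that only the one-sided inequality $p_{Y|D}(y|d) \leq (1+\epsilon) p_{Y_T}(y)$ is used — the lower bound from the absolute value is unnecessary — so the probability-ratio discrimination measure \eqref{eqn:probRatio} naturally controls estimation advantage from just its upper tail. I would also note that the argument does not depend on any particular choice of $p_{Y_T}$ (it is a free variable in the bound), which is why the conclusion of the proposition holds for every $p_{Y_T}$, matching the statement.
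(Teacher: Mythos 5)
Your proof is correct and follows essentially the same route as the paper's: both argue the contrapositive, rewrite $P_c(D|Y)$ via Bayes' rule as $\sum_y \max_d p_{Y|D}(y|d)\,p_D(d)$, apply the one-sided upper bound $p_{Y|D}(y|d) \leq (1+\epsilon)p_{Y_T}(y)$ inside the maximum, and sum out $p_{Y_T}$ to obtain $P_c(D|Y) \leq (1+\epsilon)p_D^*$. Your closing observation that only the upper tail of the probability-ratio constraint is used is a nice touch but does not change the substance of the argument.
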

\begin{proof}
We prove the contrapositive of the statement of the proposition. Assume that
\begin{equation}
\label{eq:assumption_J}
    \left| \frac{p_{Y|D}(y|d)}{p_{Y_T}(y)}-1 \right|\leq \epsilon~\forall y\in \calY,d\in \calD.
\end{equation}
Then
\begin{align*}
    P_c(D|Y) & = \sum_{y\in \calY} \max_{d\in \calD} p_{D|Y}(d|y)p_Y(y)\\
             & = \sum_{y\in \calY} \max_{d\in \calD} p_{Y|D}(y|d)p_D(d) \\
             & \leq \sum_{y\in \calY}  \max_{d\in\calD} (1+\epsilon)p_{Y_T}(y)p_D(d)\\
             & = (1+\epsilon) \max_{d\in \calD} p_D(d),
\end{align*}
where the inequality follows by noting that \eqref{eq:assumption_J} implies $p_{Y|D}(y|d)\leq (1+\epsilon)p_{Y_T}(y)$ for all $y\in \calY$, $d\in \calD$. Rearranging the terms of the last equality, we arrive at
\begin{equation*}
   \frac{ P_c(D|Y)}{\max_{d\in \calD} p_D(d)} \leq 1+\epsilon,
\end{equation*}
and the result follows by observing that the left-hand side is the definition of $\mathsf{Adv}(D|Y)$.
\end{proof}

\subsection{Training and Application Considerations}

The proposed optimization framework has two modes of operation (Fig. \ref{fig:disc_transform_pipeline}): train and apply. In train mode, the optimization problem \eqref{eqn:optimization} is solved in order to determine a mapping $p_{\Xh,\Yh|X,Y,D}$ for randomizing the training set. The randomized training set, in turn, is used to fit a classification model $f_\theta(\Xh,D)$ that approximates $p_{\Yh|\Xh,D}$, where $\theta$ are the parameters of the model. At apply time, a new data point $(X,D)$ is received and transformed into $(\Xh,D)$ through a randomized mapping $p_{\Xh|X,D}$. The mapping $p_{\Xh|D,X}$ is given by marginalizing over $Y,\Yh$:
\begin{equation}
    p_{\Xh|D,X}(\hat{x}|d,x) = \sum_{y,\hat{y}} p_{\Xh,\Yh|X,Y,D}(\hat{x},\hat{y} | x,y,d)p_{Y|X,D}(y|x,d).
\end{equation}

Assuming that the variable $D$ is not suppressed, and that the marginals are known, then the utility and discrimination guarantees set during train time still hold during apply time, as discussed in Section \ref{sec:theory:generalize}. However, the distortion control will inevitably change, since the mapping has been marginalized over $Y$. More specifically, the bound on the expected distortion for each sample becomes
\begin{equation}
\label{eq:applyguarantee}
    \EE{ \EE{\delta\bigl((x,Y), (\Xh,\Yh)\bigr) \mid D=d, X=x,Y} \mid D=d, X=x} \leq \sum_{y\in \calY} p_{Y|X,D}(y|x,d) c_{x,y,d}\defined c_{x,d}~.
\end{equation}
If the distortion control values $c_{x,y,d}$ are independent of $y$, then the upper-bound on distortion set during training time still holds during apply time. Otherwise, \eqref{eq:applyguarantee} provides a bound on individual distortion at apply time. The same guarantee holds for the case when $D$ is suppressed.

\subsection{Robustness}

We may also consider the case where the distribution $p_{D,X,Y}$ used to determine the transformation differs from the distribution $q_{D,X,Y}$ of test samples. This occurs, for example, when $p_{D,X,Y}$ is the empirical distribution computed from $n$ i.i.d.\ samples from an unknown distribution $q_{D,X,Y}$.  In this situation, discrimination control and utility are still guaranteed for samples drawn from $q_{D,X,Y}$ that are transformed using $p_{\Yh,\Xh|X,Y,D}$, where the latter is obtained by solving \eqref{eqn:optimization} with $p_{D,X,Y}$. In particular, denoting by $q_{\Yh|D}$ and $q_{\Xh,\Yh}$ the corresponding distributions 
for $\Yh,\Xh$ and $D$ when $q_{D,X,Y}$ is transformed using $p_{\Yh,\Xh|X,Y,D}$, we have  $J\left(p_{\Yh|D}(y|d), p_{Y_T}(y)\right)\to J\left(q_{\Yh|D}(y|d), p_{Y_T}(y)\right)$ and $\Delta\left(p_{X,Y},p_{\Xh,\Yh}\right)\to \Delta\left(q_{X,Y},q_{\Xh,\Yh}\right) $ for $n$ sufficiently large (the distortion control constraints \eqref{eqn:dist_control} only depend on $p_{\Yh,\Xh|X,Y,D}$). The next proposition provides an estimate of the rate of this convergence in terms of $n$ and assuming $p_{Y,D}(y,d)$ is fixed and bounded away from zero. Its proof can be found in the Appendix.

\begin{prop}
\label{prop:robust}
Let $p_{D,X,Y}$ be the empirical distribution obtained from $n$ i.i.d.\ samples that is used to determine the mapping $p_{\Yh,\Xh|X,Y,D}$, and $q_{D,X,Y}$ be the true distribution of the data. In addition, denote by $q_{D,\Xh,\Yh}$ the joint 
distribution after applying $p_{\Yh,\Xh|X,Y,D}$ to samples from $q_{D,X,Y}$. If for all $y\in \calY$, $d\in \calD$ we have $p_{Y,D}(y,d)>0$, $J\left(p_{\Yh|D}(y|d), p_{Y_T}(y)\right)\leq \epsilon$, where $J$ is given in \eqref{eqn:probRatio},
and
\begin{equation*}
\Delta\left(p_{X,Y},p_{\Xh,\Yh}\right)=\sum_{x,y} \left| p_{X,Y}(x,y)- p_{\Xh,\Yh}(x,y)\right|\leq \mu,
\end{equation*}
then with probability $1-\beta$,
\begin{align}
J\left(q_{\Yh|D}(y|d), p_{Y_T}(y)\right) &= \epsilon + O\left(\sqrt{\frac{1}{n}\log\frac{n}{\beta}}\right),\\
\Delta\left(q_{X,Y},q_{\Xh,\Yh}\right)&= \mu + O\left(\sqrt{\frac{1}{n}\log\frac{n}{\beta}}\right).
\end{align}
\end{prop}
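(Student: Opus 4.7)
My approach is to reduce both bounds to a single high-probability event---uniform closeness of the empirical distribution $p_{D,X,Y}$ to the true distribution $q_{D,X,Y}$ in total variation---and then propagate this closeness through the transformation channel. First, by Hoeffding's inequality applied entrywise on the finite support $\calD \times \calX \times \calY$ together with a union bound (or a direct multinomial concentration bound of Bretagnolle--Huber--Carol type), with probability at least $1-\beta$,
\begin{equation*}
    \|p_{D,X,Y} - q_{D,X,Y}\|_1 \leq \gamma, \qquad \gamma = O\!\left(\sqrt{\tfrac{1}{n}\log(n/\beta)}\right).
\end{equation*}
I condition on this event throughout.

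For the utility claim, observe that both $p_{\Xh,\Yh}$ and $q_{\Xh,\Yh}$ are obtained from the respective joints by applying the same conditional channel $p_{\Xh,\Yh|D,X,Y}$. The data processing inequality for total variation gives $\|p_{\Xh,\Yh} - q_{\Xh,\Yh}\|_1 \leq \|p_{D,X,Y} - q_{D,X,Y}\|_1 \leq \gamma$, and similarly $\|p_{X,Y} - q_{X,Y}\|_1 \leq \gamma$ by marginalization. Since $\Delta$ here is the $1$-norm, two applications of the triangle inequality yield $|\Delta(q_{X,Y},q_{\Xh,\Yh}) - \Delta(p_{X,Y},p_{\Xh,\Yh})| \leq 2\gamma$, giving $\Delta(q_{X,Y},q_{\Xh,\Yh}) \leq \mu + 2\gamma$.

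The discrimination claim is the main technical step, because $p_{\Yh|D}(y|d) = p_{\Yh,D}(y,d)/p_D(d)$ is a quotient whose denominator must be controlled. Applying the data processing inequality again yields $\|p_{\Yh,D} - q_{\Yh,D}\|_1 \leq \gamma$ and $|p_D(d) - q_D(d)| \leq \gamma$. The hypothesis $p_{Y,D}(y,d)>0$ on the finite support yields a uniform lower bound $p_D(d) \geq \gamma_0 > 0$, so once $n$ is large enough that $\gamma \leq \gamma_0/2$, we also have $q_D(d) \geq \gamma_0/2$. The quotient perturbation inequality
\begin{equation*}
    \left| \frac{a_1}{b_1} - \frac{a_2}{b_2} \right| \leq \frac{|a_1 - a_2|}{b_1} + \frac{a_2\,|b_1 - b_2|}{b_1 b_2}
\end{equation*}
then gives $|p_{\Yh|D}(y|d) - q_{\Yh|D}(y|d)| = O(\gamma/\gamma_0)$. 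Finally, since $J(a,b) = |a/b - 1|$, the reverse triangle inequality yields
\begin{equation*}
    \bigl|J(q_{\Yh|D}(y|d), p_{Y_T}(y)) - J(p_{\Yh|D}(y|d), p_{Y_T}(y))\bigr| \leq \frac{|q_{\Yh|D}(y|d) - p_{\Yh|D}(y|d)|}{p_{Y_T}(y)},
\end{equation*}
which is $O(\gamma)$ provided $p_{Y_T}(y)$ is bounded away from zero. Combining with the training-time bound $J(p_{\Yh|D}(y|d), p_{Y_T}(y)) \leq \epsilon$ yields the stated conclusion.

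The principal obstacle is the discrimination step: propagating a TV perturbation of the joint through a conditional probability requires uniform lower bounds on the denominator, which is precisely why the hypothesis $p_{Y,D}(y,d) > 0$ is imposed (and why a ``for $n$ sufficiently large'' qualification is implicit in the $O$-notation, with constants that depend on $\gamma_0$ and on $\min_y p_{Y_T}(y)$). Everything else follows from standard multinomial concentration combined with the data processing inequality for total variation.
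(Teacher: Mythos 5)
Your proof is correct and takes a genuinely different route from the paper's. The paper works entirely through KL divergence: it uses a Sanov-type concentration bound (probability that the empirical type has KL at least $\tau$ from the truth is at most $\binom{n+m-1}{m-1}e^{-n\tau}$), then the data processing inequality for KL to pass from the joint $p_{D,X,Y}$ to $p_{D,\Yh}$, then a dedicated lemma (Lemma~\ref{lem:keyLemma}) that converts a KL bound $\DKL(p\|q)\le\tau$ together with a multiplicative ratio bound on $p/r$ into a multiplicative ratio bound on $q/r$; the lemma's proof is a careful Taylor-series / Eulerian-polynomial argument. For the utility claim the paper goes KL $\to$ Pinsker $\to$ triangle inequality. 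Your proof instead works directly in total variation: multinomial concentration (Hoeffding entrywise plus union bound, or Bretagnolle--Huber--Carol) gives $\|p_{D,X,Y}-q_{D,X,Y}\|_1=O(\gamma)$, the data processing inequality for TV pushes this through the channel and through marginalization, the utility claim is an immediate triangle inequality, and the discrimination claim follows from an elementary quotient-perturbation bound plus the reverse triangle inequality on $J$. What your route buys is simplicity: you dispense with the paper's Lemma~\ref{lem:keyLemma} entirely, and the TV-based concentration actually has no intrinsic $\log n$ term, so it is if anything slightly tighter than the stated $O(\sqrt{\tfrac{1}{n}\log\tfrac{n}{\beta}})$. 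What the paper's route buys is somewhat more explicit constants in the multiplicative regime (the factor $\exp(\pm g(\tau,p_m))$), though both approaches hide comparable dependence on minimum probabilities in the big-$O$; the paper's proof relies on $c_m=\min_{d,y}p_{D,\Yh}(d,y)>0$ while yours relies on $\min_d p_D(d)>0$ and $\min_y p_{Y_T}(y)>0$. Both families of lower-bound assumptions are of the same nature and are acknowledged in the paper's remarks following the proposition, so neither proof is more demanding than the other in substance.
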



Proposition \ref{prop:robust} guarantees that, as long as $n$ is sufficiently large, the utility and discrimination control guarantees will approximately hold when $p_{\Xh,\Yh|Y,X,D}$ is applied to fresh samples drawn from $q_{D,X,Y}$. In particular, the utility and discrimination guarantees will converge to the ones used as parameters in the optimization at a rate that is at least $\Theta\left(\sqrt{\frac{1}{n}\log n} \right)$. The distortion control guarantees \eqref{eqn:dist_control} are a property of the mapping $p_{\Xh,\Yh|Y,X,D}$, and do not depend on the distribution of the data.

Observe that hidden within the big-O terms in Proposition \ref{prop:robust} are constants that depend on the probability of the least likely symbol and the alphabet size. The exact characterization of these constants can be found in the proof of the proposition in the appendix. Moreover, the upper bounds become loose if $p_{Y,D}(y,d)$ can be made arbitrarily small. Thus, it is necessary to assume that $p_{Y,D}(y,d)$ is fixed and bounded away from zero. Moreover, if the dimensionality of the support sets of $D,X$ and $Y$ is large, and the number of samples $n$ is limited, then a dimensionality reduction step (e.g. clustering) may be necessary in order to assure that discrimination control and utility are adequately preserved at test time. Proposition \ref{prop:robust} and its proof can be used to provide an explicit estimate of the required reduction. Finally, we also note that if there are insufficient samples to reliably estimate $q_{D,X,Y}(d,x,y)$ for certain values $(d,x,y)\in \calD\times\calX\times\calY$, then, for those groups $(d,x)$, it is statistically challenging to verify discrimination and thus control may not be meaningful. 

\section{Applications to Datasets}
\label{sec:results}

We apply our proposed data transformation approach to two different datasets to demonstrate its capabilities. We approximate $p_{D,X,Y}$ using the empirical distribution of $(D,X,Y)$ in the datasets,  specialize the optimization \eqref{eqn:optimization} according to the needs of the application, and solve \eqref{eqn:optimization} using a standard convex solver \citep{cvxpy}.

\subsection{ProPublica's COMPAS Recidivism Data}
Recidivism refers to a person's relapse into criminal behavior. It has been found that about two-thirds of prisoners in the US are re-arrested after release \citep{durose2014recidivism}. It is important therefore to understand the recidivistic tendencies of incarcerated individuals who are considered for release 
at several points in the criminal justice system (bail hearings, parole, etc.). 
Automated risk scoring mechanisms have been developed for this purpose and are currently used in courtrooms in the US, in particular the proprietary COMPAS tool by Northpointe \citep{compas_tool}. 

Recently, ProPublica published an article that investigates racial bias in the COMPAS algorithm \citep{propublica_article}, releasing an accompanying dataset that includes COMPAS risk scores, recidivism records, and other relevant attributes \citep{compas_risk_scores}. A basic finding is that the COMPAS algorithm tends to assign higher scores 
to African-American individuals, a reflection of the \textit{a priori} higher prevalence of recidivism in this group.  The article goes on to demonstrate unequal false positive and false negative rates between African-Americans and Caucasian-Americans, which has since been shown by \citet{chouldechova2016fair} to be a necessary consequence of the calibration of the model and the difference in a priori prevalence. 

In this work, our interest is not in the debate surrounding the COMPAS algorithm but rather in the underlying recidivism data \citep{compas_risk_scores}. Using the proposed data transformation approach, we demonstrate the technical feasibility of mitigating the disparate impact of recividism records on different demographic groups while also preserving utility and individual fairness. (We make no comment on the associated societal considerations.)
From ProPublica's dataset, we select severity of charge, number of prior crimes, and age category to be the decision variables ($X$). The outcome variable ($Y$) is a binary indicator of whether the individual recidivated (re-offended), and race and gender are set to be the discriminatory variables ($D$). The encoding of the 
decision and discrimination variables is described in Table \ref{tab:propublica_values}. The dataset was processed to contain around 
5k records.

\begin{table}[tb]
    \centering
    \caption{\small{ProPublica dataset features.}}
    {\footnotesize
    \resizebox{\linewidth}{!}{
    \begin{tabular}{|c|c|c|}
    \hline
        Feature & Values & Comments \\
        \hline
         Recidivism (binary) & $\{0, 1\}$ & 1 if re-offended, 0 otherwise\\ 
         Gender & \{Male, Female\} &  \\
         Race & \{Caucasian, African-American\} & Races with small samples removed\\
         Age category & $\{< 25, 25-45, > 45\}$& years of age \\
         Charge degree & \{Felony, Misdemeanor\} &  For the current arrest\\
         Prior counts & $\{0, 1-3, > 3\}$ & Number of prior crimes \\
         \hline
    \end{tabular}
    }
    }
    \label{tab:propublica_values}
\end{table}

\textbf{Specific Form of Optimization.}  We specialize our general formulation in \eqref{eqn:optimization} by setting the utility measure $\Delta(p_{X,Y}, p_{\Xh,\Yh})$ to be the KL divergence $\DKL(p_{X,Y}\| p_{\Xh,\Yh})$.
For discrimination control, we use \eqref{eqn:disc_control1}, with $J$ given in \eqref{eqn:probRatio},
while fixing $\epsilon_{y,d_1,d_2} = \epsilon$. For the sake of simplicity, we use the expected distortion constraint in \eqref{eqn:dist_control}  with $c_{d,x,y} = c$ uniformly.
The distortion function $\delta$ in \eqref{eqn:dist_control} has the following form. Jumps of more than one category in age and prior counts are heavily discouraged by setting a high distortion penalty ($10^4$) for such transformations. We impose the same penalty on increases in recidivism (change of $Y$ from $0$ to $1$). Both these choices are made to promote individual fairness. Furthermore, for every jump to the next category for age and prior counts, a penalty of $1$ is assessed, and a similar jump incurs a penalty of $2$ for charge degree. Reduction in recidivism ($1$ to $0$) has a penalty of $2$. The total distortion for each individual is the sum of squares of distortions for each attribute of $X$. These distortion values were chosen for demonstration purposes to be reasonable to our judgement, and can easily be tuned according to the needs of a practitioner.



\begin{figure*}[!tb]
    \centering
    \includegraphics[scale=.4]{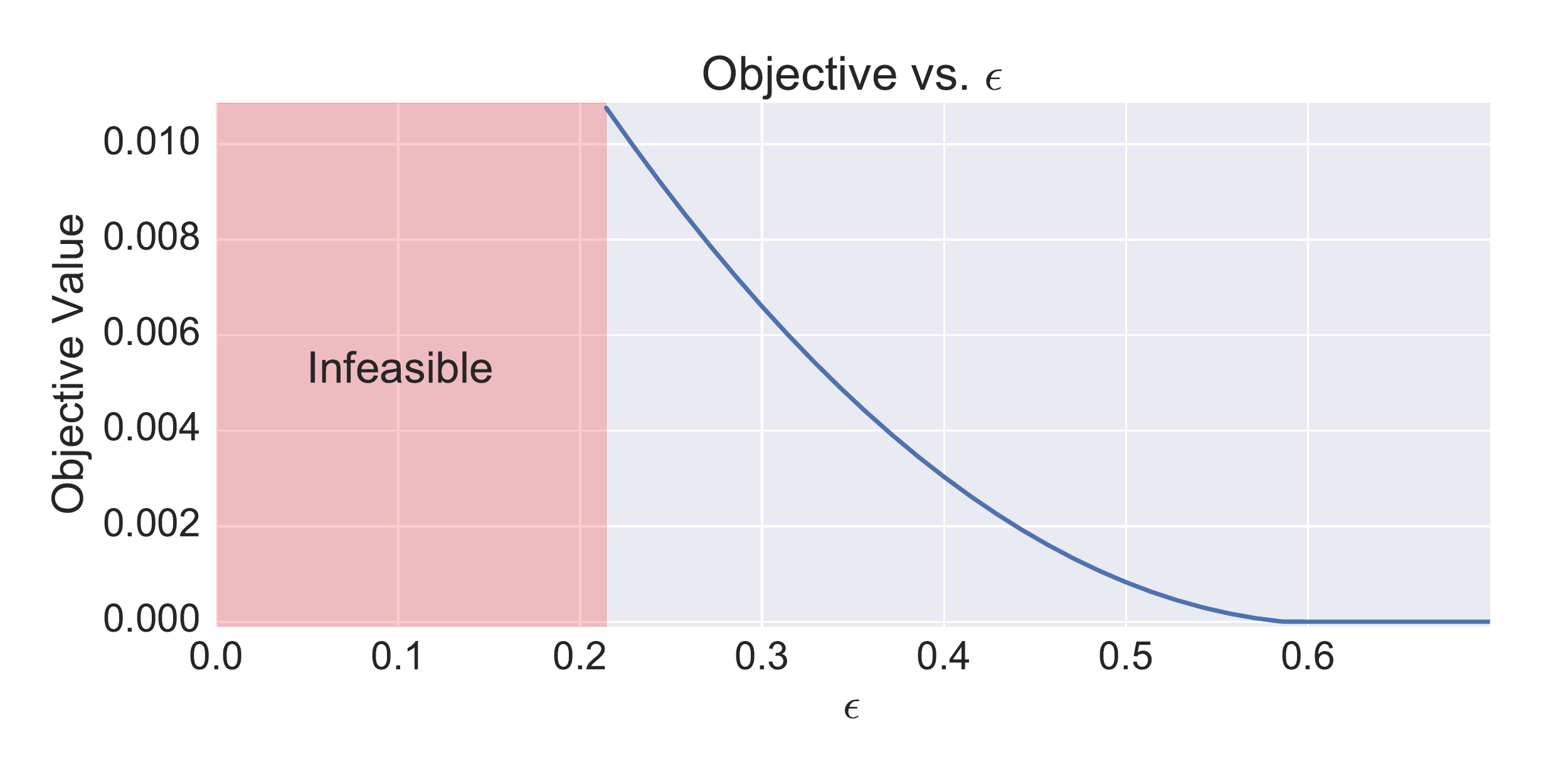}
    \caption{\small{Objective vs. discrimination parameter $\epsilon$ for  distortion constraint $c = 0.25$.}}
    \label{fig:tradeoff_compas}
\end{figure*}

\begin{figure*}[!tb]
    \centering
    \includegraphics[scale=.38]{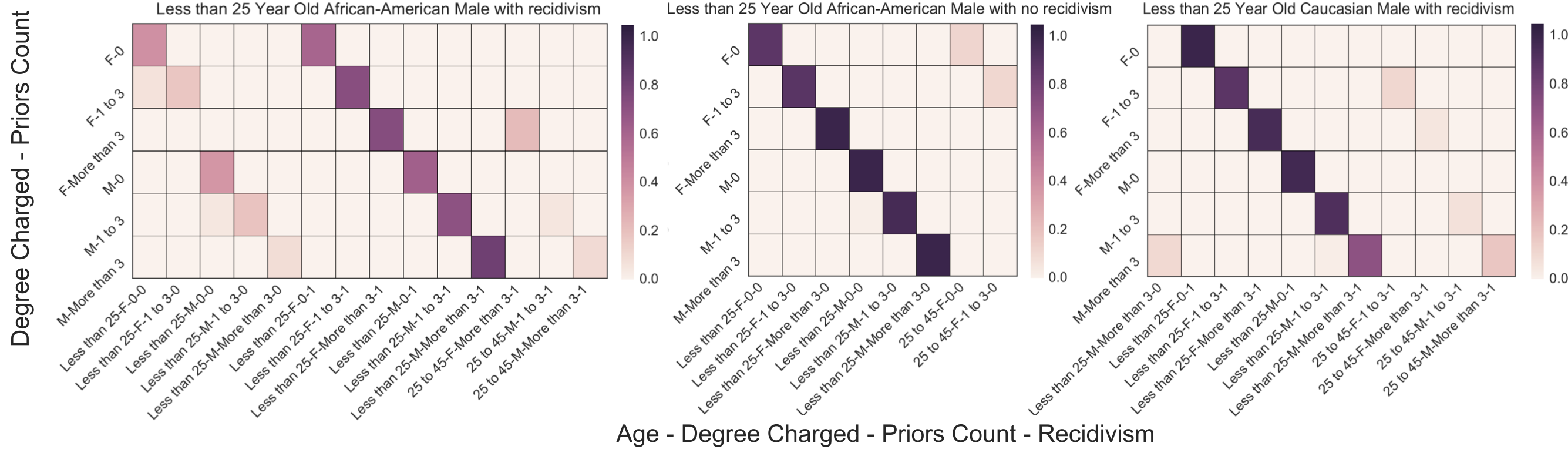}
    \vspace{-0.2in}
    \caption{\small{Conditional mappings $p_{\Xh,\Yh|X,Y,D}$ with $\epsilon = 0.1$, and $c = 0.5$ for: (\textbf{left}) $D = (\text{African-American}, \text{Male})$, less than $25$ years ($X$), $Y = 1$, (\textbf{middle}) $D = (\text{African-American}, \text{Male})$,  less than $25$ years ($X$), $Y = 0$, and (\textbf{right}) $D = (\text{Caucasian}, \text{Male})$,  less than $25$ years ($X$), $Y = 1$. Original charge degree and prior counts ($X$) are shown in vertical axis, while the transformed age category, charge degree, prior counts and recidivism $(\Xh, \Yh)$ are represented along the horizontal axis. The charge degree F indicates felony and M indicates misdemeanor. Colors indicate mapping probability values. Columns included only if the sum of its values exceeds $0.05$.}
    }
    \label{fig:cond_mappings_compas}
\end{figure*}

\begin{figure*}[!tb]
    \centering
    \includegraphics[scale=.17]{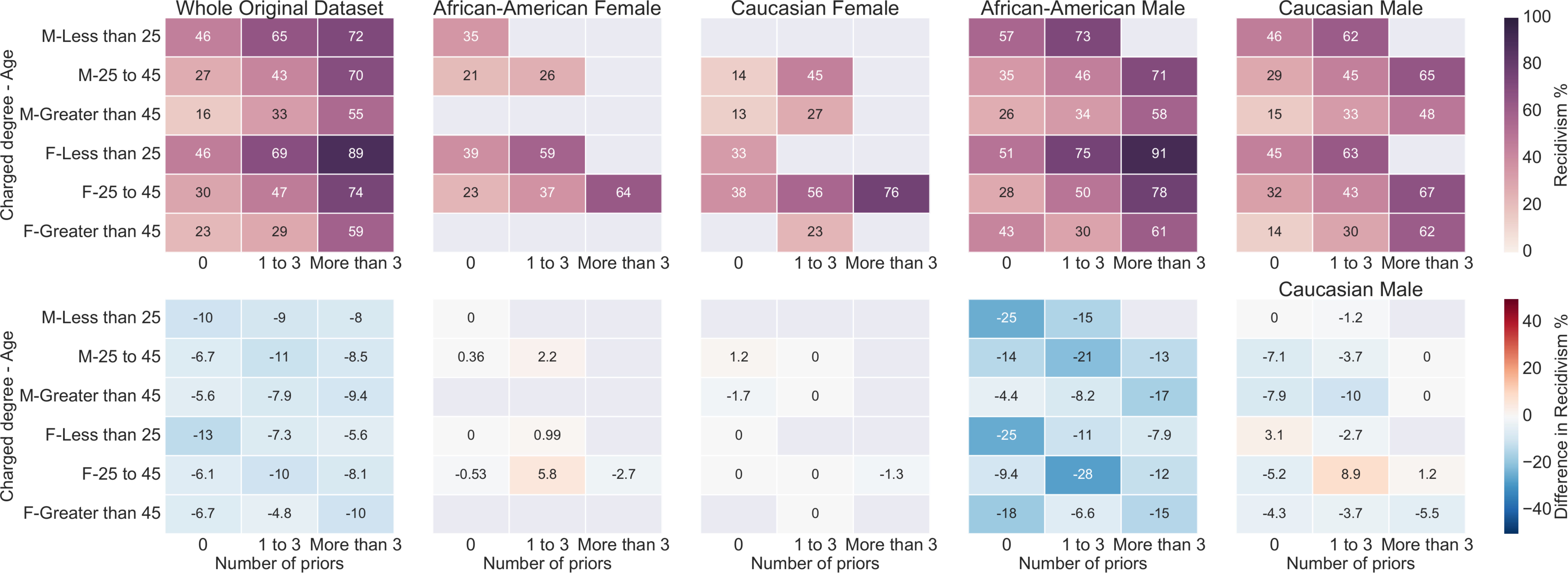}
    \caption{\small{Top row: Percentage recidivism rates in the original dataset as a function of charge degree, age and prior counts for the overall population (i.e.\ $p_{Y|X}(1|x)$) and for different groups ($p_{Y|X,D}(1|x,d)$). Bottom row: Change in percentages due to transformation, i.e.\ $p_{\Yh|\Xh,D}(1|x,d) - p_{Y|X,D}(1|x,d)$, etc. Values for cohorts of charge degree, age, and prior counts with fewer than $20$ samples are not shown. The discrimination and distortion constraints are set to $\epsilon = 0.1$ and $c = 0.5$ respectively.}
    }
    \label{fig:diffplot_compas}
\end{figure*}

\textbf{Results.} We computed the optimal objective value (i.e., KL divergence) resulting from solving \eqref{eqn:optimization} for different values of the discrimination control parameter $\epsilon$, when the expected distortion constraint $c = 0.25$. Around $\epsilon = 0.2$, no feasible solution can be found that also satisfies the distortion constraint. Above $\epsilon = 0.59$, the discrimination control is loose enough to be satisfied by the original dataset with just an identity mapping ($\DKL(p_{X,Y}\| p_{\Xh,\Yh}) = 0$). In between, the optimal value varies as a smooth function (Fig. \ref{fig:tradeoff_compas}).

We set $c = 0.5$ and $\epsilon=0.1$ for the rest of the experiments. The optimal value of utility measure (KL divergence) was $0.021$. In order to evaluate if discrimination control was achieved as expected, we examine the dependence of the outcome variable on the discrimination variable before and after the transformation. Note that to have zero disparate impact, we would like the $\Yh$ to be independent of $D$, but practically it will be controlled by the discrimination control parameter $\epsilon$. 
The corresponding marginals $p_{Y|D}$ and $p_{\Yh|D}$ are illustrated in Table \ref{tab:recidivism_rates}, where clearly $\Yh$ is less dependent on $D$ compared to $Y$. In particular, since an increase in recidivism is heavily penalized, the net effect of the randomized transformation is to decrease the recidivism risk of males, and particularly African-American males.

The mapping $p_{\Xh, \Yh|X,Y,D}$ produced by the optimization \eqref{eqn:optimization} can reveal important insights on the nature of disparate impact and how to mitigate it. We illustrate this by exploring  $p_{\Xh, \Yh|X,Y,D}$  for the COMPAS dataset next. Fig.~\ref{fig:cond_mappings_compas} displays the conditional mapping restricted to certain socio-demographic groups. First consider young males who are African-American (left-most plot). This group has a high recidivism rate, and hence the most prominent action of the mapping (besides identity transformation) is to change the recidivism value from $1$ (recidivism) to $0$ (no recidivism). The next prominent action is to change the age category from young to middle aged (25 to 45 years). This effectively reduces the average value of $\Yh$ for young African-Americans, since the mapping for young males who are African-American and do not recidivate (middle plot) is essentially the identity mapping, with the exception of changing age category to middle aged. This is expected, since increasing recidivism is heavily penalized. For young Caucasian males who recidivate, the action of the proposed transformation seems to be similar to that of young African-American males who recidivate, i.e., the outcome variable is either changed to 0, or the age category is changed to middle age. However the probabilities of the transformations are lower since Caucasian males have, according to the dataset, a lower recidivism rate.


We apply this conditional mapping on the dataset (one trial) and present the results in Fig.~\ref{fig:diffplot_compas}. The original percentage recidivism rates are also shown in the top panel of the plot for comparison. Because of our constraint that disallows changing the outcome to $1$, a demographic group's recidivism rate can  (indirectly) increase only  through changes to the decision variables ($X$). We note that the average percentage change in recidivism rates across all demographics is negative when the discrimination variables are marginalized out (leftmost column). The maximum decreases in recidivism rates are observed for African-American males since they have the highest value of $p_{Y|D}(1|d)$ (cf. Table~\ref{tab:recidivism_rates}). Contrast this with Caucasian females (middle column), who have virtually no change in their recidivism rates since they are a priori close to the final ones (see Table~\ref{tab:recidivism_rates}). Another interesting observation is that middle aged Caucasian males with 1 to 3 prior counts see an increase in percentage recidivism. This is consistent with the mapping seen in Fig.~\ref{fig:cond_mappings_compas} (middle), and is an example of the indirect introduction of positive outcome variables in a cohort as discussed above. 


\begin{table}
\scriptsize
\centering
{\def\arraystretch{1.3}\tabcolsep=2pt

  \centering
  \caption{\small{Dependence of the outcome variable on the discrimination variable before and after the proposed transformation. F and M indicate Female and Male, and A-A, and C indicate African-American and Caucasian.}}
  {
    \begin{tabular}{| c |c|c|c|c|}
    \hline
    D & \multicolumn{2}{|c|}{Before transformation} & \multicolumn{2}{|c|}{After transformation} \\

\cline{2-5}   (gender,~race) &$p_{Y|D}(0|d)$ & $p_{Y|D}(1|d)$ & $p_{\Yh|D}(0|d)$ & $p_{\Yh|D}(1|d)$ \\
    \hline
    F, A-A & 0.607 & 0.393 & 0.607 & 0.393 \\
\cline{1-1}    F, C & 0.633 & 0.367 & 0.633 & 0.367 \\
\cline{1-1}   M, A-A & 0.407 & 0.593 & 0.596 & 0.404 \\
\cline{1-1}    M, C & 0.570 & 0.430 & 0.596 & 0.404 \\
    \hline
    \end{tabular}}
      \label{tab:recidivism_rates}
 }
\end{table}

\begin{figure*}[!tb]
    \centering
    \includegraphics[scale = .17]{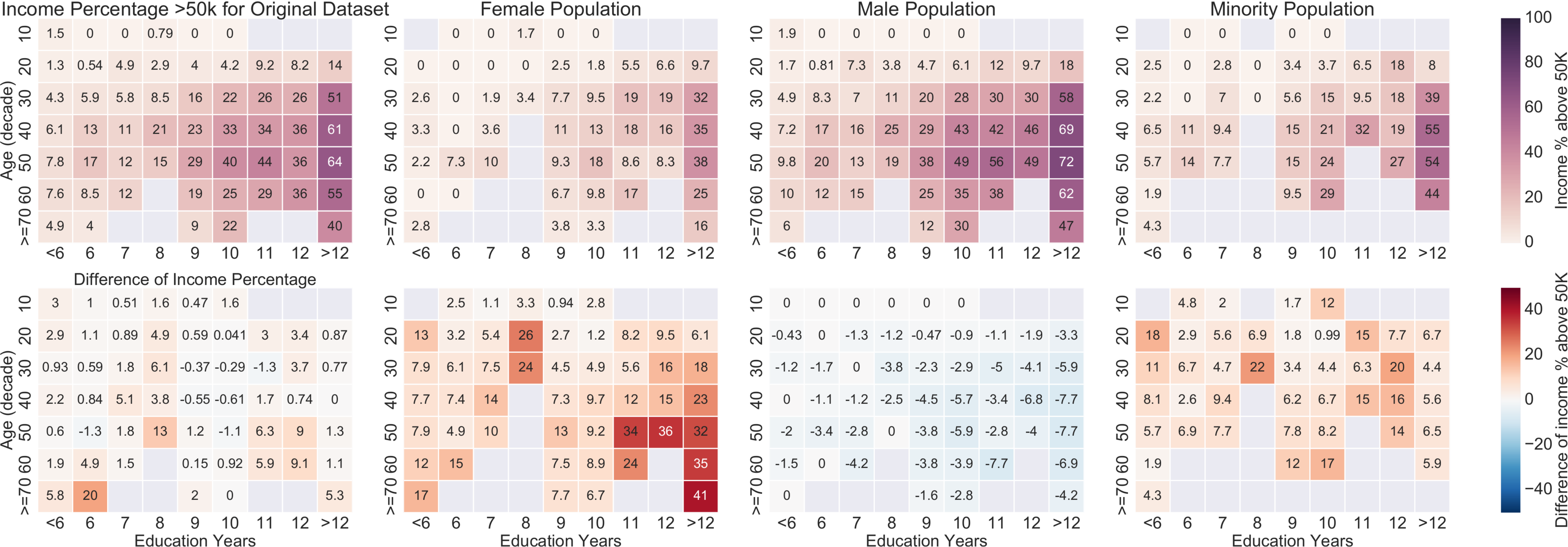}
    \caption{\small{Top row: High income percentages in the original dataset as a function of age and education for the overall population (i.e.\ $p_{Y|X}(1|x)$) and for different groups $p_{Y|X,D}(1|x,d)$). Bottom row: Change in percentages due to transformation, i.e.\ $p_{\Yh|\Xh,D}(1|x,d) - p_{Y|X}(1|x,d)$, etc. Age-education pairs with fewer than 20 samples are not shown.}}
    \label{fig:income_distr}
\end{figure*}

\subsection{UCI Adult Data}
\label{sec:uci_adult}

We apply our optimization approach to the well-known UCI Adult Dataset \citep{Lichman:2013} as a second illustration of its capabilities. 
The features were categorized as discriminatory variables ($D$): Race (White, Minority) and Gender (Male, Female); decision variables ($X$): Age (quantized to decades) and Education (quantized to years); and response variable ($Y$): Income (binary). While the response variable considered here is income, the dataset could be regarded as a simplified proxy for analyzing other financial outcomes such as credit approvals.

\textbf{Specific Form of Optimization.} 
We use $\ell_1$-distance (twice the total variation) \citep{Pollard2002} to measure utility, $\Delta\left(p_{X,Y},p_{\Xh,\Yh}\right) =\sum_{x,y} \left| p_{X,Y}(x,y)- p_{\Xh,\Yh}(x,y)\right|$.
For discrimination control, we use \eqref{eqn:disc_control2},  with $J$ given in \eqref{eqn:probRatio},
We also set $\epsilon_{y,d} = \epsilon$ in \eqref{eqn:disc_control2}. We use the distortion function in \eqref{eqn:dist_control}, and write $x = (a,e)$ for an age-education pair and $\xh = (\ah, \eh)$ for a corresponding transformed pair.  The distortion function returns (i) $v_1$ if income is decreased, age is not changed and education is increased by at most 1 year, (ii) $v_2$ if age is changed by a decade and education is increased by at most 1 year regardless of the change of income, (iii) $v_3$ if age is changed by more than a decade or education is lowered by any amount or increased by more than 1 year, and (iv) 0 in all other cases.  We set $(v_1, v_2, v_3) = (1, 2, 3)$ with corresponding distance thresholds for $\delta = 0$ as $(0.9, 1.9, 2.9)$ and corresponding probabilities ($c_{d,x,y}$) as $(0.1, 0.05, 0)$ in \eqref{eqn:dist_control}.  As a consequence, decreases in income, small changes in age, and small increases in education (events (i), (ii)) are permitted with small probabilities, while larger changes in age and education (event (iii)) are not allowed at all. We note that the parameter settings are selected with the purpose of demonstrating our approach, and would change depending on the practitioner's requirements or guidelines.

\textbf{Results.} For the remainder of the results presented here, we set $\epsilon=0.15$, and the optimal value of the utility measure ($\ell_1$ distance) was $0.014$. We apply the conditional mapping, generated as the optimal solution to \eqref{eqn:optimization}, to transform the age, education, and income values of each sample in the dataset. The result of a single realization of this randomization is given in Fig.~\ref{fig:income_distr}, where we show percentages of high income individuals as a function of age and education before and after the transformation. The original age and education ($X$) are plotted throughout Fig.~\ref{fig:income_distr} for ease of comparison, and that changes in individual percentages may be larger than a factor of $1\pm\epsilon$ because discrimination is not controlled by \eqref{eqn:disc_control2} at the level of age-education cohorts. The top left panel indicates that income is higher for more educated and middle-aged people, as expected. The second column shows that high income percentages are significantly lower for females and are accordingly increased by the transformation, most strongly for educated older women and younger women with only 8 years of education, and less so for other younger women. Conversely, the percentages are decreased for males but by much smaller magnitudes. Minorities receive small percentage increases but less than for women, in part because they are a more heterogeneous group consisting of both genders.

\section{Conclusions}
We proposed a flexible, data-driven optimization framework for probabilistically transforming data in order to reduce algorithmic discrimination, and applied it to two datasets. The differences between the original and transformed datasets revealed interesting discrimination patterns, as well as corrective adjustments for controlling discrimination while preserving utility of the data. Despite being programmatically generated, the optimized transformation satisfied properties that are sensible from a socio-demographic standpoint, reducing, for example, recidivism risk for males who are African-American in the recidivism dataset, and increasing income for well-educated females in the UCI adult dataset. The flexibility of the approach allows numerous extensions using different measures and constraints for utility preservation, discrimination, and individual distortion control. Investigating such extensions, 
developing theoretical characterizations based on the proposed framework, and quantifying the impact of the transformations on specific supervised learning tasks will be pursued in future work.

\appendices

\section{Proof of Proposition \ref{prop:robust}}
The proposition is a consequence of the following elementary lemma.

\begin{lem}
\label{lem:keyLemma}
Let $p(x)$, $q(x)$ and $r(x)$ be three fixed probability mass functions with the same discrete and finite support set $\calX$, $c_1 \defined \min_{x\in\calX} \frac{p(x)(1-p(x))}{3(1+p(x))^2}>0$ and $p_m \defined \min_x p(x)>0$. Then if 
\begin{equation}
    \label{eq:KLassump}
    \DKL\left(p\|q\right) \leq \tau \leq c_1
\end{equation}
and for all $x\in \calX$ and
\begin{equation}
    \gamma_1 \leq \frac{p(x)}{r(x)} \leq \gamma_2,
\end{equation}
then for all $x\in \calX$ and $g(\tau,p_m)\defined \sqrt{\frac{3\tau}{p_m}}$
\begin{equation}
    \gamma_1 \exp\left(-g(\tau,p_m) \right)\leq \frac{q(x)}{r(x)} \leq \gamma_2 \exp\left(g(\tau,p_m) \right).
\end{equation}
\end{lem}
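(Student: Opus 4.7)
The whole argument reduces at once to a pointwise bound on the log-likelihood ratio $\log(q(x)/p(x))$. Indeed, writing
\[
    \frac{q(x)}{r(x)} \;=\; \frac{q(x)}{p(x)} \cdot \frac{p(x)}{r(x)}
\]
and using $\gamma_1 \leq p(x)/r(x) \leq \gamma_2$, the conclusion follows immediately from any pointwise bound of the form $|\log(q(x)/p(x))| \leq g(\tau,p_m)$, since then $q(x)/p(x) \in [e^{-g},e^{g}]$ and multiplying by $p(x)/r(x)$ yields the desired two-sided inequality on $q(x)/r(x)$. So the whole task is to prove this pointwise log-ratio bound from the hypothesis $\DKL(p\|q)\leq \tau$.

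For each fixed $x_0\in\calX$ I would apply the data processing inequality to the binary partition $\{x_0\}\cup(\calX\setminus\{x_0\})$. This yields the binary KL bound
\[
    d\bigl(p(x_0)\,\|\,q(x_0)\bigr) \;\leq\; \DKL(p\|q)\;\leq\;\tau,
\]
where $d(a\|b)=a\log(a/b)+(1-a)\log((1-a)/(1-b))$. The core technical step is then a quantitative conversion of this binary-KL bound into a log-ratio bound: show that if $d(a\|b)\leq\tau$ and $\tau\leq a(1-a)/(3(1+a)^2)$, then $|\log(b/a)|\leq\sqrt{3\tau/a}$. Applying this with $a=p(x_0)$, $b=q(x_0)$ and using $p_m\leq p(x_0)$ gives $|\log(q(x_0)/p(x_0))|\leq\sqrt{3\tau/p_m}=g(\tau,p_m)$, completing the reduction.

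For the binary-KL-to-log-ratio conversion I would parameterize $b=ae^y$, set $h(y)=d(a\|ae^y)$, and study $h$ as a function of $y=\log(b/a)$ on its natural domain $y<-\log a$. Direct differentiation gives $h(0)=0$, $h'(0)=0$, and
\[
    h''(y)=\frac{a(1-a)e^y}{(1-ae^y)^2},
\]
which is strictly positive, so $h$ is convex with $h''(0)=a/(1-a)\geq a$. The plan is to show that on the interval of $y$ consistent with $h(y)\leq c_1$, one has the quadratic lower bound $h(y)\geq ay^2/3$; inverting this bound then gives $y^2\leq 3\tau/a$. The factor $c_1=p(1-p)/(3(1+p)^2)$ in the hypothesis is designed precisely so that any $y$ with $h(y)\leq\tau\leq c_1$ stays in a neighborhood of the origin where a Taylor remainder / $h''$ lower-bound argument yields exactly the coefficient $a/3$.

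\textbf{Main obstacle.} The bookkeeping in the quadratic lower bound $h(y)\geq ay^2/3$ is the heart of the argument. Pinsker-style inequalities would only give a bound scaling as $\sqrt{\tau}/p_m$ (through $|q(x)-p(x)|\leq\sqrt{2\tau}$), which is weaker than the claimed $\sqrt{\tau/p_m}$; the improvement requires exploiting the pointwise structure of $d(a\|b)$ rather than $\ell_1$-closeness. Concretely, one must verify that the cubic and higher remainder in the Taylor expansion of $h$ about $y=0$, evaluated on the interval determined by $\tau\leq c_1$, does not eat into the slack between the leading coefficient $a/(2(1-a))$ and the target coefficient $a/3$. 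Equivalently, one can show $h''(y)\geq 2a/3$ on that interval and then integrate twice. This is a one-variable calculus check, but it is where all the explicit constants in the statement are pinned down.
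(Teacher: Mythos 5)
Your overall architecture matches the paper's proof exactly: apply the data-processing inequality to the binary partition $\{x_0\}\cup(\calX\setminus\{x_0\})$ to get $d(p(x_0)\|q(x_0))\leq\tau$, parameterize the unknown by the log-ratio $y=\log(q(x_0)/p(x_0))$, and establish a quadratic lower bound $d(a\|ae^y)\geq (a/3)y^2$ on the region selected by $\tau\leq c_1$, after which the conclusion follows by multiplying through by $p(x)/r(x)$. The paper's change of variables is $q(x)=p(x)\exp(-\alpha\tau/p(x))$ rather than $b=ae^y$, but this is cosmetic. So far so good.

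The gap is in the ``one-variable calculus check'' you flag as the main obstacle, and specifically in the route you suggest for closing it: showing $h''(y)\geq 2a/3$ on the interval $\{y: h(y)\leq c_1\}$ and integrating twice. This inequality is false. Because $h''(y)=a(1-a)e^y/(1-ae^y)^2$ is increasing in $y$, it is small for $y<0$, and the cutoff $\tau\leq c_1$ is not tight enough to keep $y$ in a region where $h''\geq 2a/3$. Concretely, take $a=0.1$, so $c_1=a(1-a)/(3(1+a)^2)\approx 0.0248$. At $y=-0.75$ one has $h(-0.75)\approx 0.0237<c_1$ (so $y$ is inside the interval), yet $h''(-0.75)\approx 0.0468 < 2a/3\approx 0.0667$. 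The target inequality $h(y)\geq(a/3)y^2$ does still hold there ($0.0237 \geq 0.01875$), but the second-derivative argument fails to prove it: the two routes you describe as ``equivalent'' are not, since a uniform lower bound on $h''$ is sufficient but far from necessary for the quadratic bound. The paper sidesteps this by exploiting the sign structure of the Taylor series of $f(\alpha)$ (whose coefficients involve Eulerian polynomials): for $q>p$ all series terms are nonnegative, so the leading quadratic term alone is a lower bound; for $q<p$ the series alternates and the first \emph{two} terms give a lower bound, and the condition $\tau\leq c_1$ is engineered precisely so that the negative cubic term eats no more than the slack between the leading coefficient $1/(2(1-c))$ and the target $1/3$. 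If you want to rescue a derivative-based argument, you would need a sharper estimate than a uniform $h''$ lower bound --- for instance integrating the exact $h''$ and bounding the resulting expression using $\tau\leq c_1$ --- but the cleanest path is the alternating-series estimate the paper uses.
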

\begin{proof}
We assume $\tau>0$, otherwise $p(x)=q(x)$ $\forall x\in \calX$ and we are done. From \eqref{eq:KLassump} and the Data Processing Inequality for KL-divergence, for any $x\in \cal X$
\begin{align}
    \label{eq:step1}
    p(x)\log \frac{p(x)}{q(x)} + (1-p(x))\log \frac{1-p(x)}{1-q(x)}\leq \tau.
\end{align}
Let $x$ be fixed, and, in order to simplify notation, denote $c\defined p(x)$. Assuming, without loss of generality, $$q(x)=c\exp\left(-\frac{\alpha\tau}{c} \right),$$ then \eqref{eq:step1} implies
\begin{align}
\label{eq:step2}
f(\alpha)\defined \alpha - \frac{1-c}{\tau}\log\left( \frac{1-c\exp\left(-\frac{\alpha\tau}{c} \right)}{1-c}\right)\leq 1.
\end{align}
The Taylor series of $f(\alpha)$ around 0 has the form
\begin{equation}
\label{eq:taylor}
f(\alpha) = \sum_{n=2}^\infty \frac{(-1)^n}{n!}\left(\frac{\tau}{(1-c)c}\right)^{n-1}A_{n-1}(c)\alpha^n,
\end{equation}
where $A_n(c)$ is the Eulerian polynomial, which is positive for $c>0$ and satisfies $A_1(c)=1$ and $A_2(c)=(1+c)$. First, assume $\alpha \leq 0$. Then $f(\alpha)$ can be lower-bounded by the first term in its Taylor series expansion since all the terms in the series are non-negative. From \eqref{eq:step2}, 
\begin{equation}
\frac{\tau\alpha^2}{2(1-c)c}\leq f(\alpha)\leq 1.
\end{equation}
Consequently, 
\begin{equation}
\label{eq:firstBound}
\alpha \geq -\sqrt{\frac{2(1-c)c}{\tau}}.
\end{equation}
Now assume $\alpha \geq 0$. Then the Taylor series \eqref{eq:taylor} becomes an alternating series, and $f(\alpha)$ can be lower-bounded by its first two terms
\begin{equation}
\frac{\tau\alpha^2}{2(1-c)c}-\frac{(1+c)\tau^2\alpha^3}{6(1-c)^2c^2}\leq f(\alpha)\leq 1.
\end{equation}
The term in the l.h.s. of the first inequality satisfies
\begin{equation}
\label{eq:step3}
\frac{\tau\alpha^2}{3(1-c)c}\leq\frac{\tau\alpha^2}{2(1-c)c}-\frac{(1+c)\tau^2\alpha^3}{6(1-c)^2c^2}
\end{equation}
as long as $\alpha\leq \frac{c(1-c)}{(1+c)\tau}$. Since the  lhs is larger than 1 when $\alpha > \sqrt{\frac{3(1-c)c}{\tau}},$ then it is a valid lower-bound for $f(\alpha)$ in the entire interval where $f(\alpha)\leq 1$ and $\alpha\geq 0$ as long as
\begin{equation}
\sqrt{\frac{3(1-c)c}{\tau}}\leq \frac{c(1-c)}{(1+c)\tau} \Leftrightarrow \tau\leq \frac{c(1-c)}{3(1+c)^2},
\end{equation}
which holds by assumption in the Lemma. Thus, 
\begin{equation}
\alpha \leq \sqrt{\frac{3(1-c)c}{\tau}},
\end{equation}
and combining the previous equation with \eqref{eq:firstBound}
\begin{equation}
-\sqrt{\frac{2(1-c)c}{\tau}}\leq \alpha \leq \sqrt{\frac{3(1-c)c}{\tau}}
\end{equation}
Finally, since $\frac{q(x)}{p(x)} =\exp(-\alpha \tau/p(x))$, from the previous inequalities
\begin{align}
    \exp \left(-\sqrt{\frac{3(1-p(x))\tau}{p(x)}} \right)&\leq \frac{q(x)}{p(x)}\nonumber\\
    &\leq \exp \left(\sqrt{\frac{2(1-p(x))\tau}{p(x)}} \right),\label{eq:finalkey}
\end{align}
and the result follows by further lower bounding the lhs by $\gamma_1 r(x)\leq p(x)$ and upper bounding the rhs by $ p(x)\geq \gamma_2 r(x)$
\end{proof}

The previous Lemma allows us to derive the result presented in Proposition \ref{prop:robust}. 

\begin{proof}[Proof of Proposition \ref{prop:robust}]
Let $m\defined |\calX||\calY|\calD|$. The distribution $p_{D,X,Y}$ is the type \citep{cover_elements_2006}[Chap. 11] of $n$ observations of $q_{D,X,Y}$. Then\footnote{Other bounds on the KL-divergence between an observed type and its distribution could be used, such as \citep{cover_elements_2006}[Thm. 11.2.2], without changing the asymptotic result.}, from \citep{csiszar_information_2004}[Corollary 2.1], for $\tau>0$
\begin{align*}
\Pr\left(\DKL(p_{D,X,Y}\|q_{D,X,Y})\geq \tau \right)&\leq \binom{n+m-1}{m-1}e^{-n\tau}\\
&\leq \left(\frac{e(n+m)}{m} \right)^me^{-n\tau}.
\end{align*}
From the Data Processing Inequality for KL-divergence,  if $\DKL(p_{D,\Yh}\|q_{D,\Yh})\leq \DKL(p_{D,X,Y}\|q_{D,X,Y})$, and, consequently,
\begin{align*}
\Pr\left(\DKL(p_{D,\Yh}\|q_{D,\Yh})\leq \tau \right)&\geq \Pr\left(\DKL(p_{D,X,Y}\|q_{D,X,Y})\leq \tau \right)\\
&\geq 1-\left(\frac{e(n+m)}{m} \right)^m e^{-n\tau}.
\end{align*}
If $\DKL(p_{D,\Yh}\|q_{D,\Yh})\leq \tau$, then since $0\leq \DKL(p_{D}\|q_{D})$, we have
\begin{align*}
\DKL(p_{\Yh|D}(\cdot|d)\|q_{\Yh}(\cdot|d))\leq \frac{\tau}{p_D(d)}~\forall d\in \calD.
\end{align*}
Choosing 
\begin{equation}
\tau = \frac{1}{n}\log \left(\frac{1}{\beta}\left( \frac{e(n+m)}{m}\right)^{m} \right),
\label{eq:deltapick}
\end{equation}
then, with probability $1-\beta$, for all $d\in D$
\begin{align*}
&\DKL(p_{\Yh|D}(\cdot|d)\|q_{\Yh}(\cdot|d))\\
&\hspace{.5in}\leq {\frac{1}{np_D(d)}}\log \left(\frac{1}{\beta}\left( \frac{e(n+m)}{m}\right)^{m} \right).
\end{align*}
Assuming that $m$,  and $c_m\defined \min_{y\in \calY,d\in \calD} p_{D,\Yh} (d,y)>0$ constant, from the proof of Lemma \ref{lem:keyLemma} and, more specifically, inequalities \eqref{eq:finalkey}, as long as $\tau \leq \min_{d,y}\frac{p_{\Yh,D}(y,d)(1-p_{\Yh|D}(y|d))}{3(1+p_{\Yh|D}(y|d))^2}$, 
\begin{align}
(1-\epsilon)\exp\left(-h(n,\beta)\right) &\leq\frac{q_{\Yh|D}(y|d)}{p_{Y_T(y)}} \\
&\leq(1+\epsilon)\exp\left(h(n,\beta)\right),
\end{align}
where 
\begin{equation}
h(n,\beta)\defined \sqrt{\frac{3}{nc_m}\log \left(\frac{1}{\beta}\left( \frac{e(n+m)}{m}\right)^{m} \right)}.
\end{equation}
Observe that $$h(n,\beta) = \Theta\left(\sqrt{\frac{1}{n}\log\frac{n}{\beta}} \right).$$ Since for $x$ sufficiently small $e^x\approx 1+x$, we have 
\begin{equation}
\frac{\left| q_{\Yh|D}(y|d)-p_{Y_T}(y)\right|}{p_{Y_T}(y)}\leq \epsilon +\Theta\left(\sqrt{\frac{1}{n}\log\frac{n}{\beta}}\right),
\end{equation}
proving the first claim.

For the second claim, we start by applying the triangle inequality:
\begin{align}
\Delta\left(q_{X,Y},q_{\Xh,\Yh}\right) \leq&\; \Delta\left(p_{X,Y},p_{\Xh,\Yh}\right) + \Delta\left(q_{X,Y},p_{X,Y}\right) \nonumber\\
&+\Delta\left(q_{\Xh,\Yh},p_{\Xh,\Yh}\right) \nonumber\\
\leq&\; \mu + \Delta\left(q_{X,Y},p_{X,Y}\right) \nonumber\\
&+\Delta\left(q_{\Xh,\Yh},p_{\Xh,\Yh}\right). \label{eq:stepTV}
\end{align}
Now assume $\DKL(p_{D,X,Y}\|q_{D,X,Y})\leq \tau$. Then the Data Processing Inequality for KL-divergence yields $\DKL(p_{X,Y}\|q_{X,Y})\leq \tau$ and $\DKL(p_{\Xh,\Yh}\|q_{\Xh,\Yh})\leq \tau$. In addition, from Pinsker's inequality,
\begin{align*}
\Delta\left(q_{X,Y},p_{X,Y}\right)&\leq 2\sqrt{2\DKL(p_{X,Y}\|q_{X,Y})}\\
&\leq 2\sqrt{2\tau},
\end{align*}
and, analogously, $\Delta\left(q_{\Xh,\Yh},p_{\Xh,\Yh}\right)\leq 2\sqrt{2\tau}$. Thus \eqref{eq:stepTV} becomes
\begin{equation}
\Delta\left(q_{X,Y},q_{\Xh,\Yh}\right) \leq \mu + 4\sqrt{2\tau}.
\end{equation}
Selecting $\tau$ as in \eqref{eq:deltapick}, then, with probability $1-\beta$,
\begin{equation}
\Delta\left(q_{X,Y},q_{\Xh,\Yh}\right) \leq \mu + 4\sqrt{\frac{2}{n}\log \left(\frac{1}{\beta}\left( \frac{e(n+m)}{m}\right)^{m} \right)},
\end{equation}
and the result follows.
\end{proof}


\bibliography{references}
\bibliographystyle{icml2017}

\end{document}